\documentclass[11pt,letterpaper]{article}
\usepackage[T1]{fontenc}
\usepackage{times}
\usepackage[margin=1in]{geometry}
\usepackage{amsmath,amssymb,amsthm}
\usepackage{booktabs,graphicx,microtype}
\usepackage{float}
\usepackage{xcolor}
\usepackage{natbib}
\setcitestyle{authoryear,round,citesep={;},aysep={,},yysep={;}}
\usepackage{url,hyperref}
\hypersetup{hidelinks}
\newtheorem{proposition}{Proposition}
\newtheorem{corollary}[proposition]{Corollary}
\newcommand{\KL}{\mathrm{D}_{\mathrm{KL}}}
\newcommand{\TC}{\mathrm{TC}}
\newcommand{\E}{\mathbb{E}}
\newcommand{\Raster}{\textsc{AR-order}}
\newcommand{\LCLM}{\textsc{LCLM-order}}
\newcommand{\Defer}{\textsc{Defer}}
\newcommand{\eol}{\textnormal{\textless eol\textgreater}}
\newcommand{\void}{\textnormal{\textless void\textgreater}}
\floatstyle{ruled}
\newfloat{algorithm}{tbp}{loa}
\floatname{algorithm}{Algorithm}

\title{LatLM: Line-Parallel Language Models with Deferred Commitment}
\author{Shaorong Zhang\textsuperscript{*} \quad Shiyuan Li\textsuperscript{*} \quad Longxuan Yu \quad Greg Ver Steeg\\[0.4em]
\normalsize University of California, Riverside\\[0.3em]
\normalsize \textsuperscript{*}Equal contribution. Correspondence: Greg Ver Steeg (\href{mailto:gregoryv@ucr.edu}{gregoryv@ucr.edu})}

\date{\small Project page: \url{https://szhan311.github.io/latlm-page/}}

\begin{document}
\maketitle

\begin{abstract}
Line-parallel language models advance multiple lines concurrently for substantial speedups. However, frameworks like the Line-Coupled Language Model (LCLM) enforce a fixed commitment schedule across lines. Because dependencies vary within text, a fixed schedule cannot hold back a line whose prerequisites remain uncomputed, forcing premature commitments and cascading reasoning errors. To address this limitation, we introduce the \emph{Lattice Language Model} (\textsc{LatLM}), making commitment schedules an adaptive decoding-time choice. Text occupies a 2D lattice where eligible frontier tokens are scored each round; a policy commits confident candidates while deferring unready ones for extended context. One forward pass supervises entire sampled trajectories via 2D causal masking, while an append-only cache retains exact states.
At 482M parameters, we establish two findings.
(1) LatLM matches a data- and capacity-matched autoregressive model on held-out text when decoded sequentially, while its adaptive parallel decoder achieves near-autoregressive accuracy on independent-line tasks using 25--41\% as many decoding rounds.
(2) When cross-line dependencies cause LCLM's fixed schedule to fail, deferred commitment substantially improves accuracy while retaining parallelism where dependencies permit. On mixed dependency chains, LatLM achieves 96.5\% accuracy versus LCLM's 78.5\%, using fewer than half the decoding rounds of the autoregressive baseline.
\end{abstract}

\section{Introduction}
\label{sec:introduction}

Standard autoregressive decoding generates one token per forward pass, requiring a sequence of model evaluations as output length grows. This strict serialization makes decoding latency scale with the number of generated tokens, motivating methods that produce multiple tokens per step \citep{leviathan2023fast,chen2023accelerating,gloeckle2024better}. The challenge is not merely to predict several tokens at once, but to identify tokens that can be committed concurrently without losing the dependencies that make autoregressive generation reliable.

Existing parallel decoders typically pay for concurrency in state reuse, verification, or dependency handling. Masked diffusion has no causal KV cache, while block diffusion caches only completed blocks \citep{sahoo2024simple,nie2026large,arriola2025block}. Speculative and multi-token methods preserve autoregressive structure, but their realized speedup depends on how many drafted tokens survive verification \citep{gloeckle2024better,cai2024medusa,leviathan2023fast,chen2023accelerating}. A broader dependency cost arises whenever multiple positions or segments advance concurrently: predictions made in the same step cannot condition on one another, and fixed parallel schedules cannot pause an output whose prerequisite has not yet been generated \citep{stern2019insertion,gu2017non,ning2024skeleton,liu2024apar,jin2025learning}.

Parallel commitment is most reliable when concurrently generated tokens are weakly dependent; with exact marginals, its forward-KL cost is characterized by their conditional total correlation \citep{zhang2026generation}. Many parallel decoders seek concurrency along the token axis, but recent work identifies a complementary source across lines, where next tokens can be substantially less dependent than consecutive future tokens \citep{li2026line}. The Line-Coupled Language Model (LCLM) exploits this structure by interleaving lines into a single causal sequence and advancing them in parallel, achieving substantial reductions in decoding steps while retaining a causal model and KV cache.

This success, however, comes from a fixed commitment schedule: LCLM advances every active line by one token per step, whether or not that token's prerequisite information is available. When a token depends on a specific token in an earlier line that has not yet been generated, the model must predict without it. If the missing value cannot be reconstructed from context, it commits a guess prematurely, and subsequent generation may propagate that error. The schedule provides no way to simply wait for the prerequisite to appear.

Our central question is: \emph{Can one model support line-parallel generation
while allowing individual commitments to be deferred?} Our answer is YES. We introduce the
\emph{Lattice Language Model} (\textsc{LatLM}; Figure~\ref{fig:lattice}) to
provide this flexibility. At each round, a policy uses predictive entropy
to select tokens for commitment. Several lines can advance together, or
some can wait for additional context. What \textsc{LatLM} adds is the object the
gate runs on: a frontier that is legal by construction over
variable-length lines, one residual stream per cell whose token enters
only the key/value path, and a cache that stays exact along whatever
trajectory the policy takes, so that holding a cell back costs one
re-score and nothing else. The price is flexibility: a line can only wait
for the lines above it.

\begin{figure}[t]
\centering
\includegraphics[width=\linewidth]{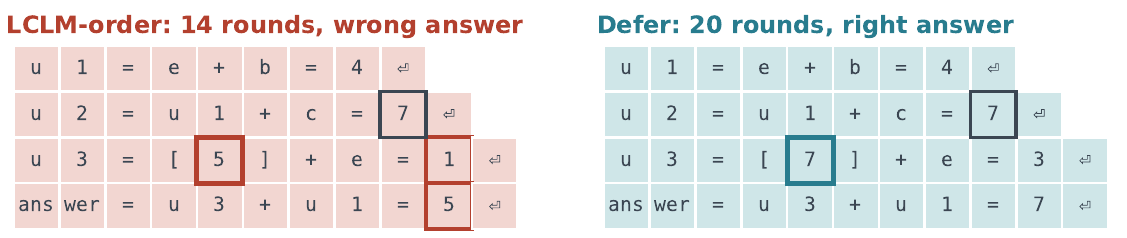}
\caption{\textbf{Why a line sometimes has to wait: one problem, one
checkpoint, two policies.} A mixed chain (Section~\ref{sec:mixed}) decoded
by the same LatLM under the fixed line-parallel schedule (left) and the
entropy gate (right); prompt rows omitted. Row $u_2$ reads $u_1$'s result
in its last digit; row $u_3$ restates $u_2$'s result as a bracketed source digit
(black frame: the result it needs). The fixed schedule reaches that token
in round 7, three rounds before $u_2$'s result exists, copies a wrong
value, and every later token is consistent with it. The gate holds that
one cell (entropy 2.2 nats) while the other rows proceed, and commits it
once $u_2$'s result is visible: 20 rounds and right, against 14 rounds
and wrong; sequential decoding takes 41.}
\label{fig:lattice}
\end{figure}

Our contributions are the model and three findings about it:
\begin{itemize}
    \item \textbf{A line-parallel language model with deferred commitment}. LatLM supports sequential, fixed line-parallel, and adaptive decoding with one set of weights. Its lattice structure allows individual frontier tokens to wait for additional context, while trajectory-based training supervises every cell in one forward pass and an append-only cache remains exact along each legal decoding trajectory.
    \item \textbf{Autoregressive modeling quality with adaptive parallel decoding}. Decoded sequentially, LatLM matches a data- and capacity-matched autoregressive model on held-out text (0.928 vs. 0.931 bits per byte), at approximately 1.3$\times$ the training compute per token. On independent-line tasks, adaptive decoding achieves near-autoregressive accuracy using 25--41\% as many decoding rounds.
    \item \textbf{Improved accuracy over fixed line-parallel decoding under cross-line dependencies}. Deferred commitment recovers near-sequential accuracy where LCLM's fixed schedule fails, retaining parallelism when dependencies permit and falling back toward sequential decoding when necessary. On mixed dependency chains, LatLM achieves 96.5\% accuracy versus LCLM's 78.5\%, using 28.9 rounds versus the autoregressive baseline's 64.8.
\end{itemize}

\section{Lattice Language Models}
\label{sec:method}

\subsection{Generating across lines, with the option to wait}
\label{sec:formulation}

A document is a grid of token cells, one line per row and one token
position per column. The committed cells form a \emph{legal prefix} if
every committed cell's left and upper neighbours are committed, wherever
they exist: each line grows left to right, and a line cannot advance
beyond the line above it (Figure~\ref{fig:lattice_steps}(a)). The
\emph{frontier} is the set of uncommitted cells whose left and upper
neighbours are committed, at most one per line (b). Eligibility is
structural: a frontier cell near the start of one line may still need a
result that has not yet appeared at the end of the line above.

In each round the model predicts a distribution at every frontier cell
from the committed context, and a policy commits a subset (c). The rest
stay open and are predicted again once more context exists. Tokens
committed in the same round cannot condition on one another, so the
choice of subset matters. Committing any subset of the frontier yields
another legal prefix, and generation repeats predict, commit, extend;
legal prefixes ordered by inclusion form a lattice
(Appendix~\ref{app:proofs}).

\begin{figure}[t]
    \centering
    \includegraphics[width=\linewidth]{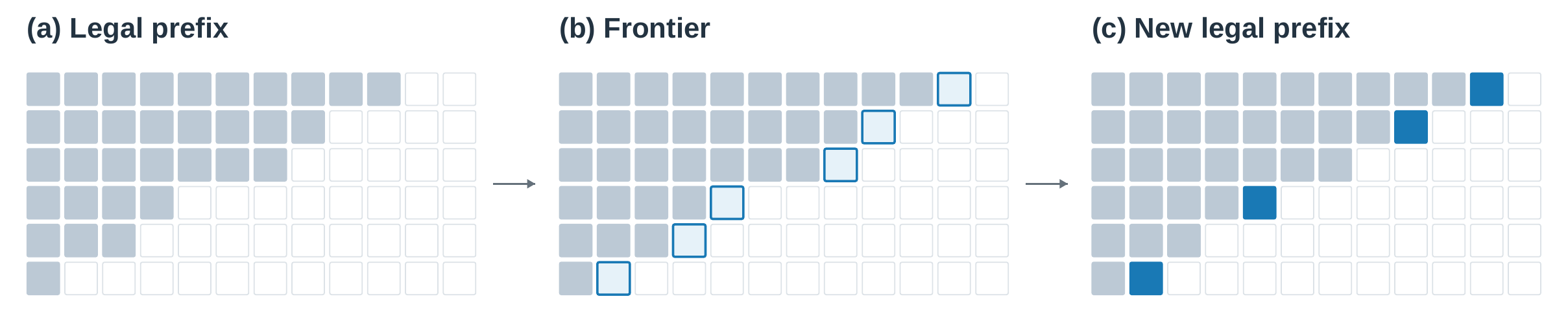}
    \caption{
    \textbf{One round of lattice generation.}
    (a) Gray cells form a legal prefix.
    (b) Blue outlines mark the six frontier positions.
    (c) Accepting four frontier tokens, shown in solid blue,
    produces a new legal prefix.
    }
    \label{fig:lattice_steps}
\end{figure}

\subsection{Mapping text to the grid}
\label{sec:units}

\textit{Text units.}
Each row of the lattice holds a contiguous span of text, which we call a \emph{unit}. Unit boundaries determine which tokens follow one another within a row and which can be generated in parallel across rows. We distinguish \emph{fixed-length units}, cut to a prescribed block size, from \emph{variable-length units} whose boundaries align with sentences or reasoning steps.

\textit{Variable-length units preserve dependencies.}
Fixed boundaries can artificially split dependencies, forcing the model to anticipate unproduced information (e.g., making ``seven'' eligible before ``four'' is committed in Figure~\ref{fig:unit_layout}a). We therefore use variable-length units aligned with natural reasoning steps (Figure~\ref{fig:unit_layout}b). This keeps an answer and its premises within the same row, leveraging the strict left-to-right constraint to ensure operands are committed first without introducing arbitrary cross-unit bottlenecks.

\begin{figure}[t]
    \centering
    \includegraphics[width=\linewidth]{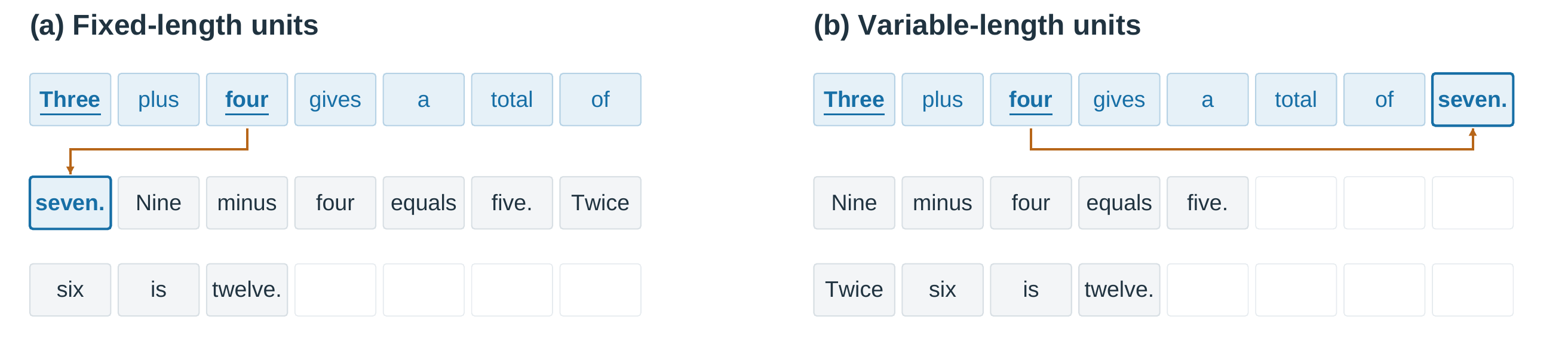}
    \caption{
    \textbf{Unit boundaries change where dependencies fall.}
    The same text is arranged into
    (a) fixed-length units of seven words and
    (b) variable-length units following sentence boundaries.
    Blue cells mark the first sentence; arrows highlight
    the dependency of its answer on an earlier operand.
    Fixed partitioning moves the answer to the next unit,
    whereas sentence-based partitioning keeps the calculation
    together. Cells show words for readability; actual token
    boundaries may differ.
    }
    \label{fig:unit_layout}
\end{figure}

\textit{Two-dimensional positional encoding.} To support
parallel generation of variable-length units, we identify each cell directly by its 2D coordinates $(r,c)$ and apply separate rotary positional encodings \citep{su2024roformer} to the row and column axes. A cell's coordinates thus remain independent of preceding lengths and commitment rounds. Additional coordinate features are described in Appendix~\ref{app:implementation}.

\textit{Unit boundaries and termination.}
Each unit begins with an anchor and ends with a model-predicted \eol{} token. Subsequent positions act as virtual padding omitted from network inputs. Units exceeding the maximum content capacity are simply wrapped into additional rows.

\subsection{Training on legal prefixes}
\label{sec:architecture}
\label{sec:training}

Training predicts frontier tokens from a partially completed document:
select a legal prefix $S$ of a reference, condition on its committed cells
and the prompt, and apply a cross-entropy loss at every frontier cell
(Figure~\ref{fig:training}(a,b)). The targets share one context but carry
their own coordinates, and none may observe another. The committed set \(S\) determines which cells are legal. The history \(h\) also records when committed cells were added, which affects the cached states because cells committed in different rounds see different contexts. Thus, two histories with the same \(S\) can produce different predictions. A prediction stream starts from a learned query embedding at its coordinates, without its token. At each layer, let ($z_v$) denote the residual stream for cell (v), suppressing the layer index, and let ($e(x_v)$) denote its token embedding. The token enters only the key/value path:

\begin{equation}
Q_v=W_Q z_v,\qquad K_v=W_K\bigl(z_v+e(x_v)\bigr),\qquad V_v=W_V\bigl(z_v+e(x_v)\bigr),
\label{eq:routing}
\end{equation}
so a target cannot leak into its own stream and one residual stream per
cell serves both roles (Appendix~\ref{app:implementation}).

\begin{figure}[t]
    \centering
    \includegraphics[width=\linewidth]{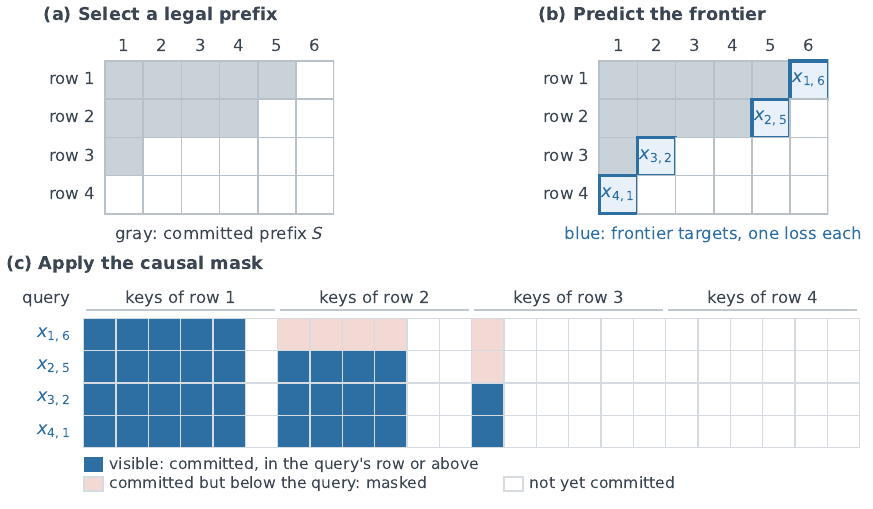}
    \caption{
    \textbf{Training from a legal prefix.}
    (a) Select a legal prefix, shown in gray.
    (b) Predict the frontier tokens and compute their
    cross-entropy losses.
    (c) The frontier-query rows of the attention mask
    (Eq.~\ref{eq:round_mask}): a target attends to committed
    cells in its own row and the rows above it, so $x_{1,6}$
    sees row 1 only and $x_{4,1}$ all ten committed cells;
    committed cells below a query are masked, and so are the
    other targets.
    }
    \label{fig:training}
\end{figure}

\textbf{Packing training contexts into one forward pass.}
To supervise every cell of a document at once we sample a \emph{commit
trajectory} $\tau=(A_1,\ldots,A_T)$, each $A_t$ a nonempty subset of the
current frontier, which assigns each cell $v$ a round $t_\tau(v)$ and a
history $h_\tau(v)$ of the prompt and the cells committed in earlier
rounds. Cells are arranged in round order with their two-dimensional
coordinates, and a query at $u$ attends to $v$ when
\begin{equation}
    M_{uv}
    =
    \mathbf{1}\{t_\tau(v)<t_\tau(u)\}\;
    \mathbf{1}\{0 \le r(u)-r(v) < W\},
    \label{eq:round_mask}
\end{equation}
where $r(\cdot)$ is the row and $W$ a row window ($W=6$ in pretraining, 32
or 64 in the task fine-tunes; the prompt rows are always within reach of the
window as configured). The first factor is causal: a cell sees only what
was committed in earlier rounds, never its own token or a concurrent one.
The second restricts attention to the cell's own row and the $W-1$ rows
above it, so a cell does not read rows below it even once they are
committed. Each cell is supervised once, at the history immediately before its
commitment, and not at the earlier histories in which a policy might have
scored and rejected it; all rounds are computed in one forward pass, the
rounds setting attention visibility, not separate passes.

The training objective is
\begin{equation}
    \mathcal{L}(\theta)
    =
    \mathbb{E}_{x\sim\mathcal{D},\,
        \tau\sim\Pi(\cdot\mid x)}
    \left[
        \sum_{v\in V_{\mathrm{target}}}
        w_v
        \bigl[-\log q_{\theta,v}
        (x_v\mid h_\tau(v))\bigr]
    \right].
    \label{eq:training}
\end{equation}

Targets include content
and unit termination; unit-start targets have weight four, others one. Pretraining samples five schedules with equal
probability: row-major, column-major, simultaneous frontier updates,
random nonempty frontier subsets, and a serialized diagonal order (one
cell at a time by increasing row plus column). Prompt cells take their
rounds from the sampled schedule like any other cell, with their content
loss masked (row-ending supervision is not) so the model cannot fit the
rules' phrasing without reading them, whereas inference prefills the
prompt. This schedule mixture exposes the model to diverse legal histories, though adaptive decoding may encounter histories not sampled during training.

\subsection{Decoding strategies}
\label{sec:policies}

One set of weights supports three decoding strategies
(Figure~\ref{fig:policies}). They use the same eligibility rule but
produce different commitment schedules: each round's choices determine
the context and eligible positions in subsequent rounds.
\Raster{} commits the first row-major frontier cell, finishing one line
before starting the next.
\LCLM{} commits the first $\min(K,|F(S)|)$ frontier cells each round,
advancing multiple lines on a fixed rule regardless of confidence.
\Defer{} scans the frontier in row-major order and commits the first
up to $K$ cells whose predictive entropy
$H_{\theta,v}(h)=-\sum_{a}q_{\theta,v}(a\mid h)\log q_{\theta,v}(a\mid h)$
is at most a threshold $\eta$, so a line whose next token the model cannot
yet predict with confidence waits while later lines take its place. If no
candidate passes, the first row-major cell is committed so that every
round makes progress. All commitments are irreversible
(Algorithm~\ref{alg:decode}).

\LCLM{} is a fixed-width stand-in for LCLM's schedule, which
advances every active line by one token without a width cap and ends with
a sequential answer tail. LCLM's interleaved causal sequence also
lets an earlier line read a later line's committed tokens; our row-causal
mask never allows this (Appendix~\ref{app:schedule}).

\begin{figure}[t]
\centering
\includegraphics[width=\linewidth]{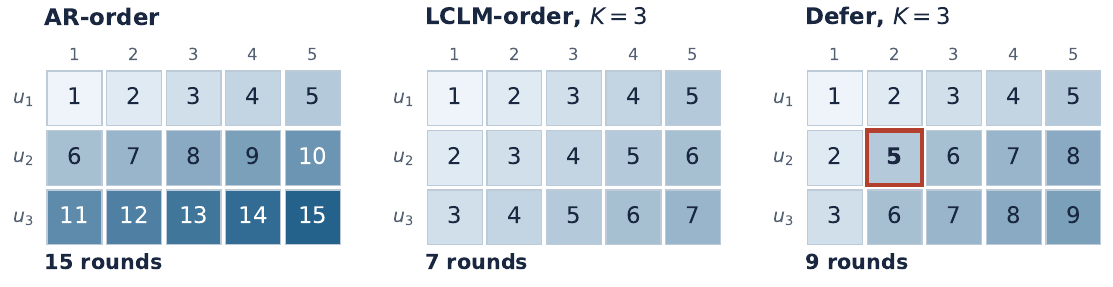}
\caption{\textbf{Three decoding strategies on the same output layout.}
Each cell shows its commitment round; equal numbers denote parallel
generation. \Raster{} completes lines in order. \LCLM{} ($K=3$)
advances all eligible lines. \Defer{} ($K=3$) selectively waits:
in this schematic, cell $(2,2)$ exceeds the entropy threshold in rounds
3--4 and passes on re-scoring in round 5; all other candidates pass.
Line 3 still advances in round 3 while that cell waits. Every schedule
respects left and upper prerequisites from earlier rounds. The layout
omits prompt and structural cells; the gate scores are illustrative.}
\label{fig:policies}
\end{figure}

For these tasks, $\eta$ is the $q$-quantile of entropies observed during
\LCLM{} decoding of 32 or 64 validation problems, without their answer
labels; each task and width has its own calibration, and we report
explicit quantiles rather than a ``best lossless'' point chosen on test
accuracy. A quantile fixes a threshold, not the subsequent acceptance
fraction or error rate. The quantile featured for a task is the loosest
of the calibrated set at which no error is observed on the same
calibration problems, the one use of their labels
(Appendix~\ref{app:gatebaselines}); zero errors on 32 problems is
consistent with a 9\% error rate at the 95\% level, so this is a choice
among a few quantiles, not a guarantee.
Low entropy is a readiness heuristic, not a
dependence detector: exact marginals bound within-round dependence by
their summed entropies, but learned marginals can be confidently wrong
(Appendix~\ref{app:proofs}).

\subsection{Incremental execution and state reuse}
\label{sec:cache}

Committed cells contribute keys and values to an append-only cache. Each
round fuses the previous round's commits and the current frontier queries
into one layerwise pass, querying at most one cell per open row, so the
commit width changes how many queries are kept, not how many are
computed; static buffers and CUDA-graph replay reduce overhead, as they do
for the AR reference. Optionally, a frontier query's per-layer states are
stored and, once it is committed, reused to build its keys and values
without a second attention and feed-forward pass; this is exact in exact
arithmetic because the stream already excludes its own token, though BF16
rounding can change trajectories (Appendix~\ref{app:reuse}). Exactness is
relative to the realized trajectory: a committed cell's state depends on
the history it was committed under, so the same committed set reached in
two orders need not give the same predictions (Appendix~\ref{app:proofs}).

\section{Experiments}
\label{sec:experiments}

One pretrained lattice, fine-tuned per task, is asked to do three things
that usually pull apart: model text as well as an autoregressive model of
the same size (Section~\ref{sec:lm}), advance independent lines in
parallel as LCLM does (Section~\ref{sec:simple}), and stay correct where
LCLM's fixed schedule fails (Section~\ref{sec:failure}).

\textbf{Tasks.}
We use tasks that separate the benefits and risks of line-parallel generation. Held-out text tests whether the lattice architecture preserves ordinary language-modeling quality. Records and list batch contain independent output lines, testing whether adaptive decoding retains parallelism when no cross-line waiting is required. Mixed chains, branching traces, and state tracking introduce explicit cross-line token dependencies, where a token may become eligible before the value it depends on has been generated; these tasks test whether deferred commitment can recover accuracy while preserving whatever parallelism the dependency structure allows.

\textbf{Setup.}
AR, LCLM and LatLM have 482M parameters and were pretrained on the
same ClimbMix split \citep{diao2026nemotron} with one tokenizer, each for
about 13.1B content-token exposures. LCLM is our port of its
interleaved layout and waterfall decoder. Each task gets a short
fine-tune per model at the same budget; LatLM's policies share one
checkpoint per task. Every model predicts its own row endings, gate
thresholds are frozen on validation problems, and we count decoder
rounds; request latency is a separate question (Section~\ref{sec:speed}).
Protocol details are in Appendix~\ref{app:configuration}.

\subsection{Sequential content modeling}
\label{sec:lm}
\label{sec:parity}

Does the lattice pay for its flexibility with worse language modeling? It
does not. Decoded row by row it matches the autoregressive twin on
held-out text, 0.928 against 0.931 bits per byte with a paired interval
that excludes zero (Table~\ref{tab:parity}), for about $1.3\times$ the
training compute per token. The flexibility is paid for only when it is
used: decoding every frontier cell at once costs the same weights about
2\%, and LCLM pays about 3\% under its own layout, in both cases at
the first tokens of each line, where a line starts before the one above
has ended (Appendix~\ref{app:boundary}). Free-generation wall-clock
measurements for the same model pair appear in Section~\ref{sec:speed}.

\begin{table}[t]
\centering
\caption{\textbf{Held-out content modeling at matched capacity and data.}
482M parameters, about 13.1B content tokens each, the same 1,024 windows;
gap intervals are paired over documents; LCLM is scored under its own layout.}
\label{tab:parity}
\begin{tabular}{lcc}
\toprule
Model & BPB $\downarrow$ (95\% CI) & Relative gap (95\% CI) \\
\midrule
Autoregressive reference & 0.9313 [0.9180, 0.9454] & --- \\
LatLM, \Raster{} & 0.9277 [0.9144, 0.9417] & $-0.39\%$ [$-0.59$, $-0.19$] \\
LatLM, parallel frontier & 0.9479 [0.9346, 0.9619] & $+1.77\%$ [$+1.53$, $+2.02$] \\
LCLM, own layout & 0.9580 [0.9446, 0.9720] & $+2.86\%$ [$+2.65$, $+3.06$] \\
\bottomrule
\end{tabular}
\end{table}

\subsection{Simple tasks: keeping line parallelism}
\label{sec:simple}
\label{sec:reasoning}

Records asks the model to verbalize several prompt-provided records, one per output line, while list batch asks it to solve one independent addition problem per line. Because no output line depends on another, both tasks test whether adaptive decoding preserves line parallelism when waiting is unnecessary. Here the option to wait should cost nothing, and on
records it does not: the gate finishes in LCLM's round count, a quarter
of AR's, at full accuracy (Table~\ref{tab:simple}). List batch shows the
trade the gate makes when lines are independent but individually harder.
LCLM keeps its rounds and loses five points; the gate spends half
again as many rounds and stays within a point of sequential decoding.
Where it is not confident it spends rounds rather than accuracy, which is
the behaviour a fixed schedule cannot have.

\begin{table}[t]
\centering
\caption{\textbf{Simple independent tasks.} Accuracy (\%) and mean rounds
on 400 fresh problems per task, own row endings; LatLM width six, one
frozen threshold per task (records $q=1.0$, list $q=0.95$).
Ratios report round-count reduction, not wall-clock speedup.}
\label{tab:simple}
\small
\setlength{\tabcolsep}{4pt}
\begin{tabular}{lcccccc}
\toprule
& \multicolumn{3}{c}{Records} & \multicolumn{3}{c}{List batch} \\
\cmidrule(lr){2-4}\cmidrule(lr){5-7}
Method & Acc. & Rounds & vs.\ AR & Acc. & Rounds & vs.\ AR \\
\midrule
AR & 100.0 & 130 & 1.00$\times$ & 100.0 & 129 & 1.00$\times$ \\
LCLM & 99.8 & 33 & 3.89$\times$ & 94.8 & 34 & 3.74$\times$ \\
LatLM, \Raster{} & 100.0 & 130 & 1.00$\times$ & 98.8 & 129 & 1.00$\times$ \\
LatLM, \LCLM{} & 100.0 & 33 & 3.94$\times$ & 77.5 & 33 & 3.96$\times$ \\
LatLM, \Defer{} & 100.0 & 33 & 3.93$\times$ & 98.2 & 53 & 2.44$\times$ \\
\bottomrule
\end{tabular}
\end{table}

\subsection{Dependent computations under fixed and adaptive schedules}
\label{sec:failure}

We test if the option to wait recovers accuracy where LCLM's fixed schedule fails. We evaluate on mixed chains (where dependent cells are scattered, allowing some parallelism) and branching traces (where every row depends on the last, severely limiting parallelism). All tasks are scored on full-trace accuracy.

\textbf{Dependence scattered through the output.}
\label{sec:mixed}
Mixed chains interleave independent, dependent, and restating rows (which copy a previous result mid-trace). The models diverge strongly here (Table~\ref{tab:mixed}). LCLM reaches restated inputs before the previous row writes them, losing 21\% of traces. The autoregressive model (AR) achieves 100\% accuracy but pays maximum rounds (64.8). The entropy gate, however, achieves 96.5\% accuracy in 28.9 rounds and 98.5\% in 33.2, about half the AR round count. Contrasting LatLM's own fixed schedule (29.0\%) with its gated schedule explicitly isolates this scheduling benefit. The gate succeeds by selectively holding most prematurely eligible restated digits until their inputs arrive, while allowing independent rows to advance. This accuracy stems from smart selection, not simply taking more rounds: randomly deferring cells or matching the gate's per-round counts with a fixed row-major order both fail (Appendix~\ref{app:gatebaselines}).

\textbf{Budget and density.}
Increased training budgets (2$\times$ or 4$\times$) help LCLM recover shallow dependencies, but deep dependencies still fail; the gate makes few errors at any tested depth by waiting instead of recomputing (Appendix~\ref{app:budget}). Furthermore, as dependence density increases from 30\% to 100\% of rows, a gated fine-tune automatically adjusts its rounds from one-third to one-half of AR's to maintain near-sequential accuracy. At the highest densities, one of two LatLM seeds has no error-free calibration quantile. Conversely, LCLM's accuracy collapses. Even a single checkpoint with a frozen threshold adapts gracefully to varying densities (Appendix~\ref{app:density}).

\textbf{Chains where every row waits.}
When every row sequentially depends on the previous one, the critical path spans the entire chain. In branching traces, successors must agree with the chosen operation of the row above. LCLM's accuracy drops rapidly as branches increase, while LatLM with \Defer{} remains accurate (Figure~\ref{fig:chains}). Similarly, in 32-step state tracking, LCLM fails at deeper steps because its waterfall must recompute running states from the prompt (Appendix~\ref{app:state}). LatLM with \Defer{} stays close to sequential accuracy on both tasks, though it sacrifices nearly all parallelism to do so---decoding about one cell per round in strict dependency order. While an oracle scheduler could still achieve 60--80\% of sequential rounds by parallelizing templated row prefixes, the gate successfully secures the necessary serial fallback that a fixed schedule cannot offer.

\textbf{What a dependent cell does when its input is missing.}
A probe (Appendix~\ref{app:reveal}) reveals that if a previous row's result is missing, a dependent cell often copies the most recent visible result; state step cells are an exception because they can recompute one lookup. If a wrong value is planted, it copies that instead. The lattice reads but does not verify. The gate prevents these confident-but-wrong copies by enforcing waits, though its entropy heuristic is imperfect: loosened thresholds let exactly these false copies slip through (Appendix~\ref{app:schedule}).

\begin{table}[t]
\centering
\caption{\textbf{Mixed chains: every policy on one LatLM checkpoint, and
the two baselines, on the same 200 fresh problems.} Full-trace accuracy
(\%) and mean rounds. ``Answer tail'' holds only the last row;
``row-major at the gate's counts'' replays the featured gate's per-round
counts with row-major cells. Bold: the loosest quantile with no error on
the 32 calibration problems.}
\label{tab:mixed}
\small
\setlength{\tabcolsep}{5pt}
\begin{tabular}{lcc}
\toprule
Policy & Acc. & Rounds \\
\midrule
AR & 100.0 & 64.8 \\
LCLM & 78.5 & 25.6 \\
LCLM, 24,000 updates & 92.0 & 25.6 \\
\midrule
LatLM, \Raster{} & 99.5 & 64.8 \\
LatLM, \LCLM{} & 29.0 & 22.2 \\
LatLM, \LCLM{} + answer tail & 29.0 & 29.4 \\
LatLM, random veto 0.5 & 27.0 & 38.1 \\
LatLM, row-major at the gate's counts & 33.5 & 31.5 \\
LatLM, \Defer{}, $q=0.9$ & 98.5 & 33.2 \\
LatLM, \Defer{}, $q=0.95$ & \textbf{96.5} & \textbf{28.9} \\
LatLM, \Defer{}, $q=0.99$ & 29.0 & 22.2 \\
\bottomrule
\end{tabular}
\end{table}

\begin{figure}[t]
\centering
\includegraphics[width=0.86\linewidth]{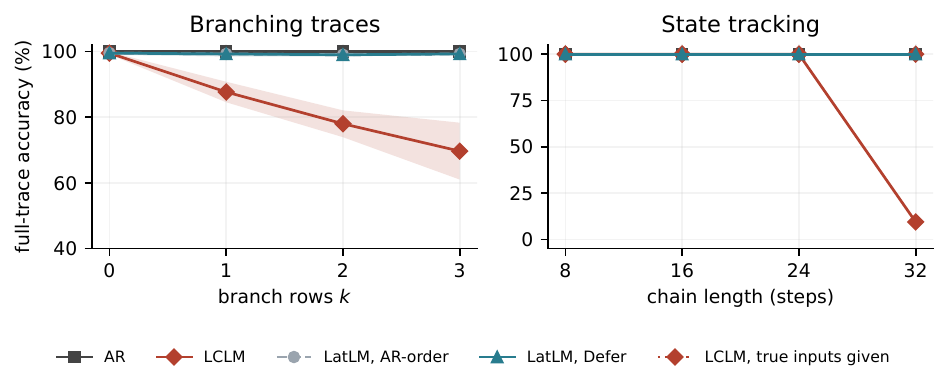}
\caption{\textbf{Chains where every row waits: the gate is right where LCLM
is not.} Left: branching traces, full-trace accuracy against the number
of branch rows $k$ (AR and LCLM: mean over three seeds, band one
standard deviation; LatLM two seeds). Right: state tracking, accuracy
against chain length; the dotted line gives LCLM the true inputs in the
prompt. LatLM with \Defer{} matches its own sequential decode everywhere, at
sequential cost.}
\label{fig:chains}
\end{figure}


\subsection{Wall-clock decoding cost}
\label{sec:speed}

Table~\ref{tab:wallclock} reports wall-clock costs for the 482M pair.
Single-request text generation with static caches and CUDA graphs takes
1.81 ms/cell under sampled width-six \LCLM{} and 2.54 under \Defer{},
versus AR's 3.59 ($1.98\times$ and $1.41\times$). Timing includes prompt
seeding, scheduling and sampling, excluding graph setup. These are
three repeats of one prompt with differing outputs, not a matched-quality
latency comparison.

\begin{table}[t]
\centering
\caption{\textbf{Wall-clock costs (482M, BF16, H100 NVL).}
Left: batch-one sampled text; speedup compares ms/cell.
Right: synthetic kernels using FlashAttention, ms/AR step or ms/LatLM round.
Protocols and full results: Appendix~\ref{app:reuse}.}
\label{tab:wallclock}
\small
\setlength{\tabcolsep}{3pt}
\begin{minipage}[t]{0.48\linewidth}
\centering
\begin{tabular}[t]{lrr}
\toprule
Decoder & ms/cell & Speedup \\
\midrule
AR & 3.59 & 1.00$\times$ \\
LatLM, \LCLM{} & 1.81 & 1.98$\times$ \\
LatLM, \Defer{} & 2.54 & 1.41$\times$ \\
\bottomrule
\end{tabular}
\end{minipage}\hfill
\begin{minipage}[t]{0.50\linewidth}
\centering
\begin{tabular}[t]{rrrr}
\toprule
Batch & AR & LatLM & + reuse \\
\midrule
1 & 3.08 & 3.57 & 3.80 \\
64 & 4.48 & 6.56 & 5.77 \\
256 & 8.34 & 20.11 & 13.19 \\
512 & 13.58 & 38.94 & 24.19 \\
1024 & 24.27 & 80.09 & 48.55 \\
\bottomrule
\end{tabular}
\end{minipage}
\end{table}

At batch 1024, query-state reuse reduces LatLM's round cost from
80.09 to 48.55 ms ($1.65\times$), versus AR's 24.27 ms/step: the
kernel-level break-even is about two commits per round. This synthetic
benchmark fixes six query/commit slots and a 200-position visible prefix,
excluding dynamic scheduling and sampling. Task accuracy and held-out
BPB are evaluated separately from these timed operating points.

\section{Related Work}
\label{sec:related}

\paragraph{Flexible autoregressive order and structured generation.}
Any-order autoregressive models train under multiple orderings
\citep{shih2022training}; A3 \citep{du2026autoregressive} is the closest
inference procedure, committing a confidence- or entropy-selected subset
of positions each step and re-predicting the rest, over arbitrary groups
with two-stream attention \citep{pynadath2026any}; Esoteric LMs cache
exactly under causal masks \citep{sahoo2025esoteric}. The gate is
therefore not new; what the lattice adds is the partial order, one
residual stream per cell with the token entering only the key/value
path, a cache exact along the realized trajectory, and the within-model
controls of Section~\ref{sec:failure}. Skeleton-of-Thought, APAR, PASTA,
ASPD, Multiverse, Hogwild! and Multi-Stream LLMs generate response parts
as parallel branches or streams
\citep[surveyed by][]{zhang2025survey,ning2024skeleton,liu2024apar,jin2025learning,chen2025aspd,yang2026multiverse,rodionov2026hogwild,su2026multi}.
The Line-Coupled
Language Model \citep{li2026line} runs a waterfall schedule as a plain
causal model, without the option to hold a line back.

\subsubsection*{Parallel token selection and verification}
Masked and block diffusion predict multiple positions
\citep{ghazvininejad2019mask,sahoo2024simple,nie2026large,arriola2025block};
Fast-dLLM's confidence threshold \citep{wu2026fast} plays the role of
our gate, over a bidirectional fixed-size block with an approximate
cache; set diffusion updates an exact cache after every flexible-position
set rather than every block \citep{arriola2026set}; lattice rows are
variable-length, ordered by dependence rather than a diffusion schedule,
and committed keys are never recomputed.

Later commit rules use the confidence profile
\citep{kasa2026fast}, an auxiliary AR model \citep{israel2026accelerating},
estimated dependencies
\citep{ringel2026dependency,zoabi2026mean,park2026dependency}, a
self-containedness test \citep{wang2026commit}, or a learned policy
\citep{jazbec2025learning}; \citet{cai2026confidence} analyse when
confidence-based decoding is efficient, and \citet{asaria2026neither}
measure how a diffusion LM commits, as Appendix~\ref{app:schedule} does
for the gate; ReFusion fills parallel
slots autoregressively \citep{li2026refusion} and ParallelBench measures
dependence-induced failures \citep{kang2026parallelbench}. Training under many legal histories, as
our schedule mixture does, is shared with MBD-LM, D2F, NAP,
Jacobi forcing and certainty forcing
\citep{jin2026multi,wang2026diffusion,li2026diffusion,hu2025fast,chen2026dparallel}.
Unlike speculative sampling \citep{leviathan2023fast,chen2023accelerating}
or parallel token prediction \citep{draxler2026parallel}, lattice commitment
carries no verification guarantee.

\section{Discussion and Conclusion}
\label{sec:limitations}

\textsc{LatLM} allocates sequential computation according to dependence.
At 482M it matches the autoregressive twin's held-out BPB, retains
LCLM-level round reductions on independent lines, and recovers
near-sequential accuracy where fixed schedules fail. The cost approaches
sequential decoding on serial chains and half of AR's rounds on mixed
chains. Wall-clock measurements show per-cell gains
in text generation and reduced batched kernel cost with query-state reuse.
Task-specific fine-tunes, limited seeds, and cross-model differences in
representation and training preclude a universal threshold or LCLM failure
boundary. Matched-quality serving speedups and a readiness signal for
confident-wrong copies remain open.

\bibliographystyle{plainnat}
\bibliography{references}

\clearpage
\appendix

\section{Structural and Probabilistic Details}
\label{app:proofs}

\subsection{Legal sets and the frontier}
Consider a rectangular grid $X=\{X_{r,c}\}$ with $R$ rows and $C$ columns,
indexed from one, and define $(r,c)\preceq(r',c')$ when $r\le r'$ and
$c\le c'$. A committed set $S$ is legal if it is downward closed under
$\preceq$: whenever a cell is committed, all its predecessors are
committed. Equivalently, every committed cell has its left and upper
neighbors committed, when they exist. The frontier is
\begin{equation}
 F(S)=\{v\notin S:\text{all strict predecessors of }v\text{ belong to }S\}.
 \label{eq:frontier}
\end{equation}

\begin{proposition}[Legal parallel updates]
\label{prop:legality}
The legal sets form a distributive lattice under union and intersection.
For a legal $S$, $F(S)$ is an antichain with at most one cell per row and
per column. Committing any $A\subseteq F(S)$ preserves legality.
\end{proposition}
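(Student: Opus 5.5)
The plan is to treat the grid as the finite poset $(X,\preceq)$, the product of two chains $[R]\times[C]$, and to identify legal sets with its down-sets (order ideals). The three claims are then handled in turn.

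\textbf{Lattice claim.} First I check that the union and the intersection of two down-sets are down-sets. If $v\in S\cup T$ and $u\preceq v$, then $u$ lies in whichever of $S$ or $T$ contains $v$. The same argument works for intersection. The empty set and $X$ are legal, so the legal sets form a sublattice of the Boolean lattice $2^X$. Any sublattice of a distributive lattice is distributive, because the distributive identities for $\cup,\cap$ hold in $2^X$ and are inherited. Before this, I would record the equivalence with the neighbour formulation used in the text. If every committed cell has its left and upper neighbours committed, then induction on $(r-r')+(c-c')$ shows that every predecessor $(r',c')\preceq(r,c)$ is committed, since one can walk from $(r,c)$ to $(r',c')$ by unit left and up steps. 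The converse is immediate.

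\textbf{Frontier claim.} Suppose $u\prec v$ are distinct cells both in $F(S)$. Since $v\in F(S)$, all strict predecessors of $v$ lie in $S$, so $u\in S$. This contradicts $u\in F(S)$, which requires $u\notin S$. Hence $F(S)$ is an antichain. Two distinct cells in the same row $r$ are comparable, since $(r,c)\preceq(r,c')$ when $c\le c'$, and likewise for two cells in the same column. So an antichain contains at most one cell per row and at most one per column.

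\textbf{Update claim.} Let $A\subseteq F(S)$, take $v\in S\cup A$, and let $u\prec v$. If $v\in S$, then $u\in S$ by legality of $S$. If $v\in A\subseteq F(S)$, then $u\in S$ by the definition of the frontier. In both cases $u\in S\cup A$, so $S\cup A$ is legal. Notice that the antichain property is not needed at this step. It only explains why concurrent commits never constrain one another.

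I do not expect a real obstacle here. The only point needing care is the equivalence between the ``left and upper neighbour'' definition and downward closure, which requires the short induction on path length. A secondary point is the variable-length setting with virtual padding. I would remark that restricting to any down-set $Y\subseteq X$ of admissible cells preserves all three arguments verbatim, since down-sets of a subposet are again closed under $\cup$ and $\cap$.
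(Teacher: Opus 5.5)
Your proof is correct and follows the paper's argument step for step: down-sets are closed under $\cup$ and $\cap$ (hence form a distributive sublattice of $2^X$); a comparable pair in $F(S)$ would force the smaller cell into $S$; and every predecessor of a cell of $A\subseteq F(S)$ already lies in $S$. Your path induction fills in the equivalence with the left/upper-neighbour definition, which the paper only asserts with ``Equivalently.'' One caution applies to your closing aside only: with ragged rows the non-padding cells are generally not a down-set (consider a long row under a short one), and the paper instead keeps the full rectangle and commits \void{} padding as structural cells rather than restricting the poset.
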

\begin{proof}
Let $S_1,S_2$ be order ideals. If $v\in S_1\cup S_2$, then $v$ belongs
to at least one ideal, which contains all predecessors of $v$; hence
$S_1\cup S_2$ is an ideal. If $v\in S_1\cap S_2$, its predecessors
belong to both ideals, so their intersection is also an ideal. Union
and intersection are the join and meet under inclusion and satisfy the
distributive laws of sets.

Suppose distinct $u,v\in F(S)$ satisfy $u\preceq v$. The frontier
definition requires every strict predecessor of $v$, including $u$, to
belong to $S$, contradicting $u\in F(S)$. Thus $F(S)$ is an antichain.
Cells in one row, or in one column, are comparable, giving at most one
frontier cell per row and column. Finally, all predecessors of any
$v\in A\subseteq F(S)$ already lie in $S$. Adding all of $A$ therefore
preserves downward closure, irrespective of their token values.
\end{proof}

\begin{corollary}[Geometric parallel depth]
For an initially empty $R\times C$ grid with no extra constraints and
unlimited commit width, at least $R+C-1$ rounds are required, and this
number is attained by committing every frontier cell at each round.
\end{corollary}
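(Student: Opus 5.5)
The plan is to prove the lower bound with a chain argument and the upper bound with an explicit schedule, both resting on Proposition~\ref{prop:legality}. The relevant chain is the monotone staircase
\[
(1,1)\prec(1,2)\prec\cdots\prec(1,C)\prec(2,C)\prec\cdots\prec(R,C),
\]
which has $R+C-1$ cells and runs through the poset $(X,\preceq)$.

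\textbf{Lower bound.} First I will show that two comparable cells can never be committed in the same round. By Proposition~\ref{prop:legality}, each round commits a subset $A_t\subseteq F(S_{t-1})$, and $F(S_{t-1})$ is an antichain. So if $u\prec v$, the round of $v$ is strictly later than the round of $u$, i.e.\ $t_\tau(u)<t_\tau(v)$. Every cell must eventually be committed, so the $R+C-1$ cells of the chain receive strictly increasing rounds. Hence the last of them is committed in round at least $R+C-1$. Unlimited commit width does not affect this argument, because it relies only on the antichain property, not on any bound on $|A_t|$.

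\textbf{Attainment.} Take the greedy schedule $A_t=F(S_{t-1})$ starting from $S_0=\emptyset$. I will prove by induction on $t$ that
\[
S_t=\{(r,c): r+c-1\le t\}\cap X.
\]
The base case $t=0$ gives the empty set. For the inductive step, assume the claim for $S_{t-1}$. A cell $(r,c)\notin S_{t-1}$ satisfies $r+c-1\ge t$. Its strict predecessors all lie in $S_{t-1}$ exactly when its immediate left and upper neighbours do; checking these two suffices because $S_{t-1}$ is downward closed. Those neighbours have coordinate sum one less, so they lie in $S_{t-1}$ iff $r+c-2\le t-1$. Therefore
\[
F(S_{t-1})=\{(r,c):r+c-1=t\}\cap X,
\]
which is the $t$-th anti-diagonal, and committing it yields the claimed $S_t$. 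Since $\max(r+c-1)=R+C-1$, the grid is full after exactly $R+C-1$ rounds.

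\textbf{Main obstacle.} The only delicate point is the reduction ``all strict predecessors in $S$ $\iff$ left and upper neighbours in $S$.'' This equivalence holds for downward-closed $S$ and was already noted in the definition of legality. The argument also assumes that every round commits a nonempty set, but that assumption does not affect either bound.
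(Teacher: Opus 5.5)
Your proposal is correct and follows the paper's own proof: a monotone path of $R+C-1$ pairwise comparable cells must use $R+C-1$ distinct rounds, and the anti-diagonal schedule (round $r+c-1$ for cell $(r,c)$) attains this bound. Your induction showing that committing the whole frontier each round produces exactly this schedule spells out a step the paper only asserts. One small correction: the antichain property gives distinct rounds, not strictly ordered ones (the ordering comes from the frontier definition), but distinct rounds are all the lower bound needs.
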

\begin{proof}
A monotone path from $(1,1)$ to $(R,C)$ contains $R+C-1$ cells, each a
predecessor of the next; no two can be committed in one round. This
gives the lower bound. The schedule assigning round $r+c-1$ to cell
$(r,c)$ is legal and attains it.
\end{proof}
This bound concerns the rectangular precedence graph only. A prompt,
virtual padding, width limits, and additional dependency constraints
change the decoding problem. It is not a lower bound on the number of
neural evaluations required to solve a reasoning task by other means.

\subsection{Histories and normalized generation}
Let $h_t$ contain the prompt and all earlier committed cells, their values,
and their grouping into rounds; write $S_t$ for its committed set. A policy
selects a nonempty $A_t\subseteq F(S_t)$ with $|A_t|\le K$, and a
stochastic round uses the factorization
\begin{equation}
 Q_{\theta,A_t}(x_{A_t}\mid h_t)
 =\prod_{v\in A_t}q_{\theta,v}(x_v\mid h_t),
 \qquad S_{t+1}=S_t\cup A_t.
 \label{eq:commit}
\end{equation}
Conditioning on $h_t$ rather than $S_t$ matters: cached representations
retain their processing history, so different trajectories reaching the
same committed set need not produce the same predictions. For a fixed
trajectory, the product in Equation~\eqref{eq:commit} over
all rounds defines a normalized distribution over a finite grid because
each round uses normalized conditionals on previous rounds. A deterministic
adaptive policy with nonempty updates also defines a normalized sequential
process when it depends only on the observed history and model predictions,
not on unseen target values. Summing over next-round outcomes recursively
verifies normalization. The per-token conditionals learned under different
histories need not be marginals of a common, order-independent joint
distribution. An adaptive schedule therefore need not preserve the
distribution induced by \Raster{} even when both use the same weights.

\subsection{Dependence and marginal prediction error}
\label{sec:dependence}
For a fixed history $h$ and candidate set $A$, a standard decomposition
separates the cost of independent commitment from marginal prediction error:
\begin{equation}
 \KL\!\left(P_A\,\middle\|\,\prod_{v\in A}q_{\theta,v}\right)
 =\underbrace{\TC_P(X_A\mid h)}_{\text{dependence within the committed set}}
 +\underbrace{\sum_{v\in A}\KL(P_v\|q_{\theta,v})}_{\text{marginal prediction error}},
 \label{eq:decomposition}
\end{equation}
where $P_A$ is the target joint distribution, $P_v$ its marginals, and
$\TC_P=\KL(P_A\|\prod_vP_v)$ the total correlation; all distributions
condition on $h$.
\begin{proposition}
\label{prop:decomposition}
For categorical distributions with finite divergence and a fixed history
$h$, Equation~\eqref{eq:decomposition} holds.
\end{proposition}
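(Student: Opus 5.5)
The plan is to expand the left-hand divergence as a single expectation under $P_A$ and split the log-ratio by inserting the product of the true marginals $\prod_{v\in A}P_v$. Everything conditions on the fixed history $h$, which is suppressed below. Write $x_A=(x_v)_{v\in A}$, let $\mathcal{X}_A$ be the finite product alphabet, and adopt the convention $0\log(0/\cdot)=0$.

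For any $x_A$ with $P_A(x_A)>0$ we have the pointwise identity
\[
\log\frac{P_A(x_A)}{\prod_{v}q_{\theta,v}(x_v)}
=\log\frac{P_A(x_A)}{\prod_{v}P_v(x_v)}
+\sum_{v\in A}\log\frac{P_v(x_v)}{q_{\theta,v}(x_v)}.
\]
Taking the expectation under $P_A$ turns the first term into $\TC_P(X_A\mid h)$ by definition. For the second term, each summand depends on $x_v$ alone, so its expectation under $P_A$ equals its expectation under the marginal $P_v$. That expectation is $\KL(P_v\|q_{\theta,v})$. Linearity of the finite sum then gives Equation~\eqref{eq:decomposition}.

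The only real work is justifying the split, and this is where I expect the main obstacle: the hypothesis of finite divergence has to be turned into the needed support conditions. Finiteness of the left-hand side means $P_A(x_A)>0$ implies $\prod_v q_{\theta,v}(x_v)>0$, hence $q_{\theta,v}(x_v)>0$ for every $v$.

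Marginalization also shows $P_A(x_A)>0\Rightarrow P_v(x_v)>0$ for each $v$, so $\prod_v P_v(x_v)>0$. Consequently every logarithm in the pointwise identity is finite on the support of $P_A$, and off that support the terms contribute zero by convention. Because the alphabet is finite, each expectation is a finite sum of finite terms, so no $\infty-\infty$ issue arises and linearity applies.

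It remains to check that $\KL(P_v\|q_{\theta,v})$ is finite. This follows because $P_v(a)>0$ implies some $x_A$ with $x_v=a$ has $P_A(x_A)>0$, which in turn gives $q_{\theta,v}(a)>0$. A closing remark can note that both terms are nonnegative, so the left side upper-bounds each. This is what lets the main text bound the within-round dependence by summed entropies when the marginals are exact, using $\TC\le\sum_v H(P_v)$.
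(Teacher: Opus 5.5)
Your proof is correct and takes the same route as the paper. You insert $\prod_{v\in A}P_v$ into the log-ratio, identify the first expectation as $\TC_P(X_A\mid h)$, and marginalize each remaining expectation to $\KL(P_v\|q_{\theta,v})$. Your support bookkeeping is a useful addition that the paper's terser proof leaves implicit: finiteness of the left-hand side forces $q_{\theta,v}(x_v)>0$, and $P_v(x_v)\ge P_A(x_A)>0$, on the support of $P_A$, so no $\infty-\infty$ arises.
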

\begin{proof}
Suppress conditioning on $h$. Add and subtract the log of the product
of target marginals:
\begin{align}
 \KL\!\left(P_A\,\middle\|\,\prod_{v\in A}q_{\theta,v}\right)
 &=\E_{P_A}\left[\log\frac{P_A(X_A)}{\prod_{v\in A}P_v(X_v)}\right]
   +\sum_{v\in A}\E_{P_A}\left[\log\frac{P_v(X_v)}{q_{\theta,v}(X_v)}\right]\\
 &=\TC_P(X_A\mid h)+\sum_{v\in A}\KL(P_v\|q_{\theta,v}).
\end{align}
The last equality follows by marginalizing the expectation in each term.
\end{proof}
For finite categorical variables,
$\TC_P(X_A\mid h)=\sum_{v\in A}H_P(X_v\mid h)-H_P(X_A\mid h)$, and
nonnegativity of joint entropy gives $\TC_P\le\sum_vH_P(X_v\mid h)$, the
bound behind the entropy heuristic of Section~\ref{sec:policies}. Even
exact marginals lose within-round dependence when sampled independently.
Conversely, a unique deterministic answer given the prompt has zero target
total correlation, yet a finite model may need intermediate results to
compute it; the tasks of Section~\ref{sec:failure} test this reliance on
context, not an information-theoretic need to reason serially.

\section{Architecture and Execution Details}
\label{app:implementation}

\begin{algorithm}[t]
\caption{Entropy-gated commitment for a \textsc{LatLM}}
\label{alg:decode}
\small
\textbf{Input:} Given history $h$, legal set $S$, commit width $K$, threshold $\eta$.
\begin{enumerate}
\item Commit every legal structural cell without a forward pass: the anchor
      at column one of a newly legal row, and \void{} padding after a row's
      \eol{} (or after its last column), which closes the row.
\item Set $F\leftarrow F(S)$, ordered by row and then column. Stop if $F$ is empty.
\item Compute $q_{\theta,v}(\cdot\mid h)$ and $H_{\theta,v}(h)$ for all $v\in F$.
\item Let $A$ be the first $K$ cells of $F$ satisfying $H_{\theta,v}(h)\leq\eta$.
      If $A=\varnothing$, select the first row-major cell instead.
\item Predict each $x_v$, $v\in A$, using its distribution from step 3.
\item Append $(A,x_A)$ to $h$, set $S\leftarrow S\cup A$, and return to step 1.
\end{enumerate}
\textbf{Output:} Generated content.
\end{algorithm}

\paragraph{Token routing.}
With $z_v^{(\ell)}$ the residual stream at layer $\ell$ and $e(x_v)$ the
cell's token embedding, the projections are those of Eq.~\ref{eq:routing}
at every layer, up to normalization and positional rotation. The query's
initial residual contains no $x_v$. By induction over
layers, its attention can receive token identities only from strictly
earlier rounds, which cannot have attended to $v$ in their own updates.
Tokenwise feed-forward layers preserve this property. For frontier
inference, no token embedding is needed for an uncommitted query, since
its key and value are not visible to another current query. The
implementation uses special handling for given seed positions, whose
values are already part of the prompt.

\paragraph{Coordinates and structure.}
The checkpoint combines two-axis RoPE with normalized absolute
coordinate features and boundary flags; it is not purely translation
invariant in the row coordinate. In pretraining and held-out text scoring,
column normalization uses a fixed
configuration value of 66 rather than the current document's maximum
realized length, preventing that feature from revealing future row
lengths. Task fine-tuning and prompt prefill normalize columns by the
packed grid width; the reported task decodes use 66 for incremental
scoring and cache updates. Rows longer than the content limit are wrapped. Once \eol{}
is produced, subsequent row padding is structural and omitted from
transformer inputs. Task evaluations supply row anchors and the output row
count; the model predicts \eol{}, and generated \eol{} cells are included
in reported rounds.

\paragraph{Two gating implementations.}
Algorithm~\ref{alg:decode} specifies the reference reasoning policy:
if every candidate exceeds the threshold, it commits the first row-major
cell. The fused free-generation implementation used in
Table~\ref{tab:speed} instead falls back to up to $K$ lowest-entropy
candidates. It also predicts row termination and uses the fixed
2.5-nat threshold in that benchmark. Its timing must therefore not be
assigned to the reasoning policy without a matching evaluation.

\section{Training and Evaluation Configuration}
\label{app:configuration}

\paragraph{Corpus and token budget.}
The corpus is ClimbMix \citep{diao2026nemotron}, with an 8k BPE tokenizer.
The corpus has approximately 14.2M training documents and
13.13B content tokens, with a held-out document split. The lattice
samples six-row windows; the causal reference samples chunks of up to
512 tokens. Both sample documents uniformly, followed by offsets within
documents. The training budget counts sampled content-token exposures,
approximately one corpus token count; random window sampling does not
mean that every corpus token is visited exactly once.

\paragraph{Architecture and optimizer.}
Both 482M models use 24 transformer layers, hidden dimension 1280,
16 attention heads, feed-forward dimension 5120, AdamW, weight decay
0.1, gradient clipping at 1.0, batch size 96, and bf16 training.
The causal reference trains for 350,500 steps, uses a peak learning
rate of $10^{-3}$ after 200 warm-up steps, and decays linearly over its
last 15\% of steps. The lattice is a continuation: initial pretraining
uses $6\times10^{-4}$; adaptation to the padding-free layout and
supervised row termination uses $5\times10^{-5}$; the final continuation
keeps that rate before a final linear decay. The combined lattice exposure is approximately 13.1B
content tokens. Row starts receive weight four in its training loss;
evaluation is unweighted. Neither configuration was selected through
a learning-rate sweep at the final budget.

\paragraph{BPB convention.}
For document $i$, let $L_i$ be summed unweighted content-token NLL and
$B_i$ its byte denominator. We report
$(\sum_iL_i)/(\log 2\sum_iB_i)$. The evaluator computes $B_i$ as the sum
of UTF-8 lengths obtained by decoding each scored token individually;
both models share this denominator. This tokenwise convention may
differ from byte counting after decoding the entire concatenated text.
Bootstrap resampling operates on paired document-level $(L_i,B_i)$
records. Scores exclude structural-token losses and therefore are
content modeling scores, rather than full-document code lengths
including structural generation.

\paragraph{Training cost.}
Matched content-token exposure is not matched compute. Timed on one H100
at batch 96 with each model's pretraining configuration, the lattice
trainer processes about $1.3\times$ fewer content tokens per GPU-second
than the autoregressive twin, roughly 130 against 100 single-GPU hours
for the two runs. The measured ratio is not explained by parameter count alone: a committed
cell's token-to-key/value routing (Eq.~\ref{eq:routing}) runs the shared
query/key/value projection twice, on the residual stream with and without
the token embedding, which adds roughly a quarter to each block's linear-layer
cost relative to one autoregressive step; the packed two-dimensional attention
and the six-row windows change the attention cost separately.

\paragraph{Fine-tuning cohorts.}
Every task fine-tune starts from the final lattice checkpoint (or the
matched AR and LCLM checkpoints) and trains all three models for the
same number of updates on the same generator: 3,000 for records and list,
6,000 for the branching traces, 12,000 for state tracking and the modular
and mixed chains. Test problems use a separate generator split, with
offset 10,000 for the simple tasks, modular and mixed chains, and fresh-set
confirmations; the state and branching evaluations use offset zero.
There are 400 problems per simple task, 200 per condition
otherwise, with 32 test problems per depth in the state scan. The state
and branching scans use a 64-column cap and nine prompt rows; the modular
and mixed chains 32 columns and four prompt rows. LatLM upweights
row-start losses by four; AR and LCLM use unweighted losses. In a mixed
chain each row after the first is independent with probability $0.4$,
dependent with $0.3$ or restating with $0.3$, and the answer adds the last
result to an earlier independent one.

In the LCLM task decoder, the answer-row anchor enters the cache
in the round that starts the last reasoning row, one round earlier than
in its training layout. Cross-model task comparisons therefore also
include this difference in training and decoding context.

\paragraph{\Defer{} calibration.}
Thresholds are entropy quantiles observed under the \LCLM{} policy on
validation problems (64 for records and list, 32 elsewhere), then frozen;
within each scan the checkpoint, width, layout and threshold are fixed
across test conditions. Every model generates its own row endings with
the output row count specified. Equal aggregate accuracies do not imply
identical per-example outputs.

\section{Additional Task Results}
\label{app:additional}

\subsection{Decoding examples}
\label{app:examples}
Figures~\ref{fig:ex_mixed}--\ref{fig:ex_text} show single decodes with
every cell shaded by the round it was committed in. On the mixed chain
(Figure~\ref{fig:ex_mixed}) the restating row needs the value the row
above has not yet written. LCLM's waterfall reaches that token
first and restates a wrong digit; the rest of its trace is consistent
with the wrong value. LatLM's fixed schedule does the same. The gate
holds that one cell until the value above exists and commits the
rest of the row on schedule, so it finishes in half the sequential
rounds with the right answer. On the records task
(Figure~\ref{fig:ex_records}) nothing needs to wait: the gate advances
every line together and produces the same text as sequential decoding in
a third of the rounds. On held-out text (Figure~\ref{fig:ex_text}) the
gate commits about two cells per round; the cells it holds longest are
line starts and the openings of clauses, which is where
Appendix~\ref{app:boundary} finds the waterfall's loss.

\begin{figure}[ht]
\centering
\includegraphics[width=\linewidth]{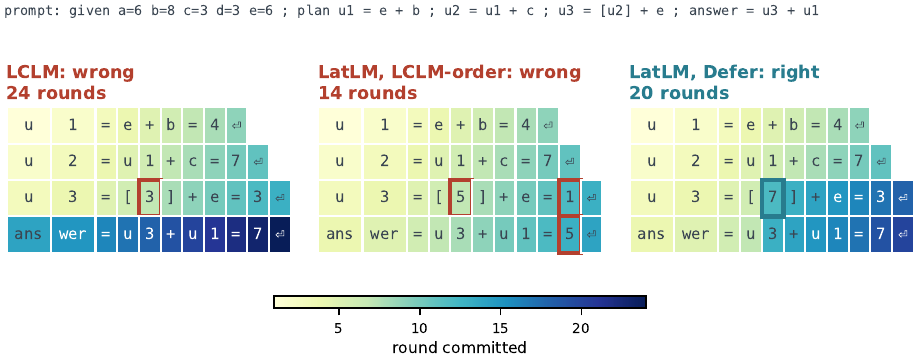}
\caption{\textbf{One mixed chain under three decoders.} Cells shaded by
commit round (LCLM: the emission order its layout implies). Red
frames mark tokens that differ from the reference; the teal frame marks
the restated digit the gate held. LatLM with \Raster{} takes 41 rounds.}
\label{fig:ex_mixed}
\end{figure}

\begin{figure}[ht]
\centering
\includegraphics[width=\linewidth]{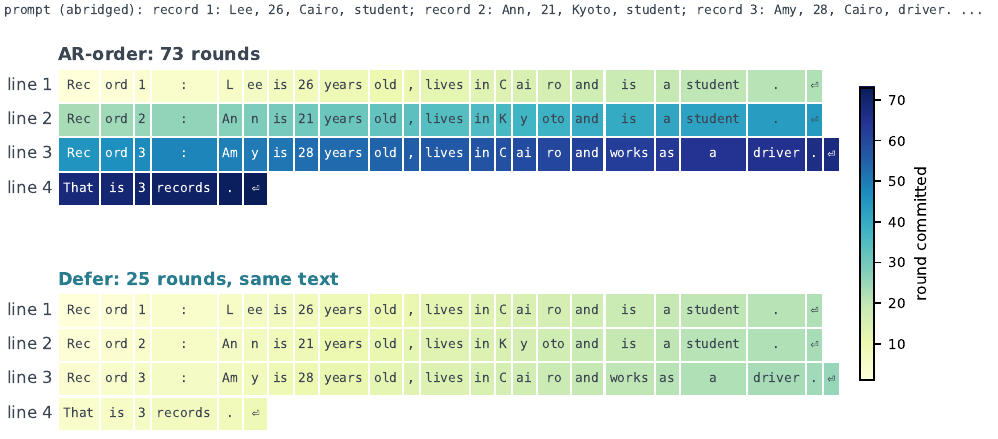}
\caption{\textbf{One records problem, \Raster{} and \Defer{}.} Independent
lines advance together under the gate; the text is identical.}
\label{fig:ex_records}
\end{figure}

\begin{figure}[ht]
\centering
\includegraphics[width=0.96\linewidth]{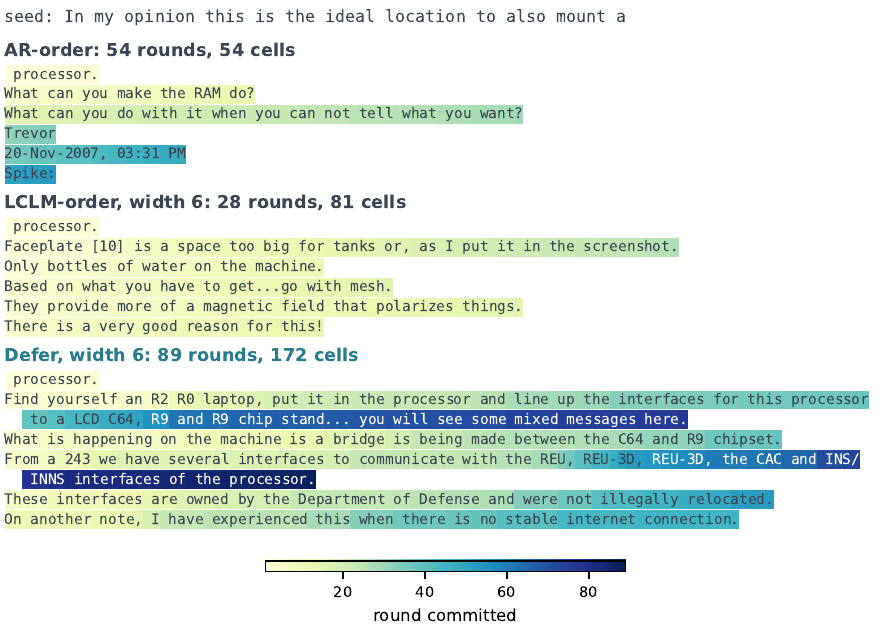}
\caption{\textbf{One held-out seed continued by three policies of the
same checkpoint} (sampled, temperature one, top-$p=0.95$; each policy
draws its own continuation). Shading gives the commit round; the gate
uses the generation study's threshold.}
\label{fig:ex_text}
\end{figure}

\subsection{State tracking across depths}
\label{app:state}
A prompt gives two permutations over eight states, a start state and a
sequence of operations; each output row records its input and the result
of one lookup, so every row needs the result of the last. Through 24
steps every model is correct and LCLM needs a fifth of the rounds
(Figure~\ref{fig:chains}, right). At 32 steps LCLM fails on nearly every trace;
given the true inputs in the prompt it completes them all, so what it
loses is not the lookup but the running state, which its waterfall must
recompute from the prompt before the previous row has finished. LatLM with \Defer{} is correct throughout with both training seeds, the separation
replicates on four times as many fresh problems, and its cost is the full
sequential round count. LCLM's failure point moves with training budget,
so it is a limit at this budget rather than a bound on depth, and row
count and depth are confounded.
\subsection{Sampling diagnostic for the fork scan}
\label{app:sampling}
Under argmax a near-tie at a branch is broken by a deterministic function
of the logits, and the successor row, reading the same prompt with the
same weights, breaks it the same way; consistency is then free without
reading anything. Sampling (temperature one, top-$p=0.95$, thresholds
recalibrated under sampling) makes the branch a real coin flip. LCLM's
accuracy at three branches roughly halves relative to greedy decoding and
nearly all of its wrong rows are branch rows; the autoregressive model is
unaffected, and LatLM with \Defer{} stays within a few points of its own
sequential decode. This is the sense in which the failure is
informational rather than computational: the value is not in the prompt.

\subsection{Reveal probe: what a dependent cell does when its input is missing}
\label{app:reveal}
On the fork and state LatLM checkpoints, we take a dependent row's source
cell (or the answer row's number) and score the frontier once under
teacher-forced histories that differ only in the previous row's result:
\emph{present}; \emph{wrong} (another admissible value planted in that cell);
\emph{absent} (the previous row committed through its operator, result cell
empty); and \emph{holding} (rows above complete, the previous row committed
only through the column that makes the probed cell legal -- the state the gate
is in while it holds a row). 32 fresh test problems, four dependent rows each
plus the answer row; entropy is computed as in the evaluation decode.
Table~\ref{tab:reveal} reports the probability of the correct value, of the
planted value, and of the most recent result visible on the grid (the row
above the predecessor's). A diagnostic that uses the reference to build
histories, not a decoding method.

\begin{table}[ht]
\centering
\caption{Reveal probe. Mean probability of the correct source value, the
planted wrong value, and the most recent visible result; median entropy.
The first and third can coincide when the most recent visible result is
the correct value, so the columns are not disjoint.
Answer rows are the final ``Answer'' / ``ends with'' cells.}
\label{tab:reveal}
\small
\setlength{\tabcolsep}{4pt}
\begin{tabular}{llcccc}
\toprule
Cells & History & $p$(correct) & $p$(planted) & $p$(most recent visible) & Entropy \\
\midrule
fork, step & present & 1.000 & -- & -- & 0.001 \\
fork, step & wrong & 0.001 & 0.994 & -- & 0.001 \\
fork, step & absent & 0.122 & -- & 0.730 & 0.483 \\
fork, answer & holding & 0.004 & -- & 0.916 & 0.107 \\
state, step & present & 1.000 & -- & -- & 0.000 \\
state, step & wrong & 0.104 & 0.896 & -- & 0.024 \\
state, step & absent & 0.985 & -- & 0.128 & 0.015 \\
state, answer & holding & 0.209 & -- & 0.900 & 0.224 \\
\bottomrule
\end{tabular}
\end{table}

Three things follow. The lattice reads the previous row's result rather
than deriving it: with the result present it is exact, and a planted
wrong value is copied. With the result missing it writes the most recent
result it can see, the row above the predecessor's, on most fork step
cells and on essentially every answer row; state step cells are the
exception because a one-step table lookup is recomputable from the
visible operator. And the gate's threshold has to be as tight as it is
because these copies are often confident: in a looser greedy decode of
the state task every loss was an answer row equal to the second-to-last
result, and the probe at the holding history had that copied value as its
argmax in every one of them.

\subsection{When only one cell must wait: modular chains}
\label{app:modular}
Modular chains make the dependence local. A plan of three to eight steps
adds a named constant to the previous value modulo ten; every token of a
row but its result digit is a copy from the plan, and a final row copies
the last value. The prompt states the plan, so LCLM, recomputing
the running value from the prompt, is lossless here; the task isolates
what the gate does when exactly one cell per row must wait. LatLM's fixed
schedule fails on the answer row and almost nowhere else, because that
row is written before the last result exists. Two controls isolate this effect. Holding
the answer row until every other row has ended, LCLM's own answer tail,
recovers the accuracy at the cost of the whole row; the gate recovers it
within half a round of the fixed schedule, because it holds only the
digit that is waiting (Figure~\ref{fig:curves}, third panel).

\subsection{\Defer{} versus random deferral at matched cost}
\label{app:gatebaselines}
Is the gate's recovery a matter of deferring more cells or of deferring
the right ones? Figure~\ref{fig:curves} places every policy of one
checkpoint per task on the accuracy--rounds plane, with two controls.
Random deferral vetoes every frontier candidate independently with a
fixed probability, under the same progress fallback; it approaches
sequential accuracy only at nearly sequential cost. Row-major replay commits, in
each round, as many cells as the gate committed in that round of the same
problem, but the first cells in row-major order; it matches the gate's
rounds approximately and collapses to the fixed schedule's
accuracy. At the parallel operating points, matching the commit count alone
does not recover the gate's accuracy; the choice of cells matters.

Two further controls (Figure~\ref{fig:curves}) place the gate between a
fixed rule and an oracle that sees the dependency edges but not the
values. A \emph{line stagger}, delaying each row a fixed number of
columns behind the one above, matches the gate only on the modular
chains, where a single cell waits; on the other three tasks it is safe
only once it is nearly sequential. The oracle, which holds every
dependent cell until the row it reads has ended, is lossless everywhere
and marks how much of the near-sequential cost is the task's rather than
the gate's: on the mixed and modular chains the gate is already close to
it, but on the branching traces it captures only a small share of the
oracle's saving over sequential decoding, and on the state chains none
at all, where entropy's only safe quantile is fully sequential.

\begin{figure}[ht]
\centering
\includegraphics[width=0.92\linewidth]{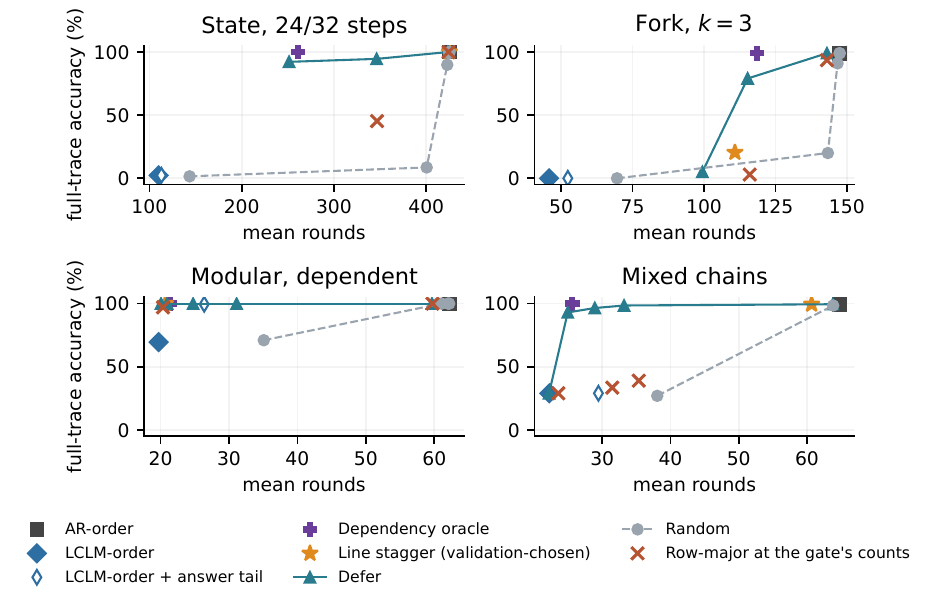}
\caption{\textbf{Accuracy against rounds for every policy of one
checkpoint.} One panel per dependent task, same fresh problems within a
panel; left and up is better.
\Defer{} points are the calibrated quantiles, random points the veto rates;
single runs, no error bars. On the state and fork chains only the
near-sequential end is accurate; on the modular chains the gate is
lossless down to the fixed schedule's cost; on the mixed chains it trades
a few points for less than half the rounds where the fixed schedule
collapses. Crosses: the gate's per-round commit counts replayed with
row-major cells; plus signs: the dependency oracle; stars: the line
stagger chosen on validation.}
\label{fig:curves}
\end{figure}

\subsection{What the gate does at each operating point}
\label{app:schedule}
Analysis of commitment schedules shows two regimes. At the
median-entropy threshold of the state and branching tasks it is
sequential decoding in all but name: about one cell per round, the
fallback firing in nearly half the rounds, and no dependent cell ever
committed before its input. Its accuracy there comes from staying in
dependency order; random deferral can also approach this accuracy at
nearly sequential cost. At looser thresholds the gate selects. On the branching task nearly a
thousand input cells reached the frontier before their result; the gate
held all but a few dozen until the result arrived, and every trace in
which it let one through failed. On the mixed chains at the featured
quantile it held nineteen in twenty of the restated inputs that reached
the frontier early, which is where its accuracy at half the rounds comes
from. Where a cell it lets through is recomputable, as a state one step
behind is, the trace usually survives; where it is not, it does not.

\subsection{Mixed chains across dependence densities}
\label{app:density}
The mixed chains of Section~\ref{sec:mixed} draw each row after the
first as independent with probability $1-\rho$ and dependent or
restating otherwise ($\rho=0.6$ in the main text). Figure~\ref{fig:density}
repeats the protocol across densities, one fine-tune per model and
density: LCLM's accuracy falls steadily, while the gate stays within a
few points of sequential accuracy, its rounds rising from about a third
of the autoregressive model's to about half. Calibration on 32 problems
is not infallible: at the two highest densities, one of two LatLM seeds
has no quantile free of calibration error and contributes no \Defer{} point
rather than a misleadingly safe one, and at $\rho=0$, where no row but
the answer reads a written value, that fine-tune never learns the merged
answer even though a $\rho=0.6$ checkpoint solves the same problems --
our reading is that the lookup gets too little training signal when
nothing else in the output needs a written value, a limitation of the
training scheme rather than the decoders. The more direct test of
adaptivity is the bottom row: one checkpoint and one threshold, both
fixed at $\rho=0.6$, applied without retraining to every other density.
Its rounds still track the density it meets while its accuracy stays
close to sequential, whereas the same LCLM checkpoint collapses
from lossless to under a third -- the gate adapts to the dependence it
meets, the fixed schedule cannot.

\begin{figure}[ht]
\centering
\includegraphics[width=0.9\linewidth]{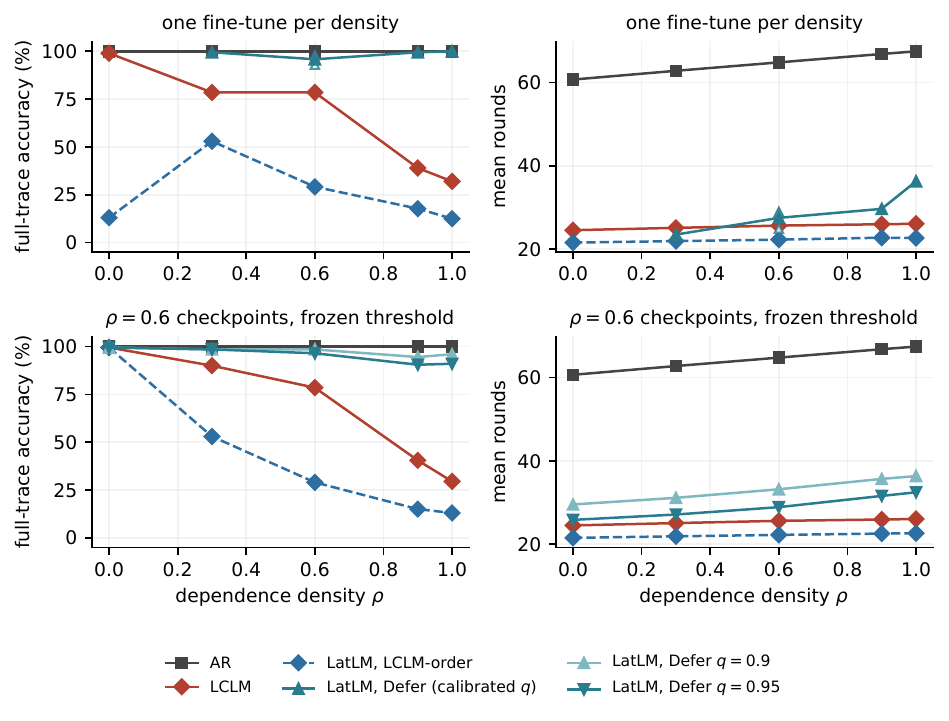}
\caption{\textbf{Mixed chains by dependence density.} Full-trace accuracy
(left) and mean rounds (right) against the fraction $\rho$ of rows that
need the previous result. Top: one fine-tune per model and density, the gate at
its calibrated quantile. One fine-tune per model and density; where
several LatLM seeds exist the \Defer{} line is their mean and hollow markers
the seeds. Bottom: the $\rho=0.6$ checkpoints of Table~\ref{tab:mixed}, with
LatLM's threshold frozen at its $\rho=0.6$ calibration, applied to the
other densities.}
\label{fig:density}
\end{figure}

\subsection{Training budget and recomputation depth on the mixed chains}
\label{app:budget}
LCLM's waterfall reaches a restating row's bracketed source digit before the
row above has written its result, so the value it restates must be
recomputed from the nearest values the waterfall can see. Fine-tuned for
twice and four times the updates, LCLM rises from 78.5\% to 92\% and
stays there. Figure~\ref{fig:recompute} locates the errors by the depth of
the recomputation, the number of dependent rows since the last
independent one: more training removes the shallow errors and leaves the
deep ones, while the gate, which waits for the value instead of
recomputing it, has almost none at any depth. Doubling LatLM's own budget
leaves its sequential decode and its gate at the featured quantiles within
a point of the 12,000-update checkpoint; only the loosest operating
points move.

\begin{figure}[ht]
\centering
\includegraphics[width=0.92\linewidth]{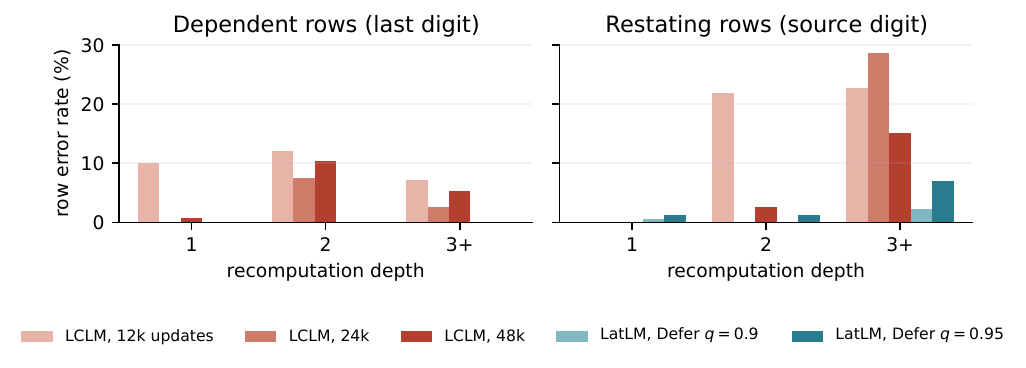}
\caption{\textbf{Row error rate by recomputation depth on the mixed
chains.} Depth $d$: the row is the $d$-th dependent or restating row since
the last independent row; rows are counted while the trace above them is
still correct. Dependent rows need the previous result in their last
digit, restating rows in their bracketed source digit. AR and LatLM with \Raster{} make
no errors and are omitted.}
\label{fig:recompute}
\end{figure}

\subsection{Held-out loss by position within the line}
\label{app:boundary}
Scoring the held-out windows of Table~\ref{tab:parity} by the position
of each token within its line locates the waterfall's cost where LCLM's
own analysis does \citep{li2026line}: at the first token of a line,
which is generated before the line above has ended, and to a lesser
degree at the next three; LatLM's own schedules match autoregressive decoding from the fifth token
on; LCLM, scored under its own layout, does not -- it remains about
0.02 bits per byte above AR there, a gap that is small per token but, summed
over the many tokens at that position, is a majority of its total excess
loss over AR. A teacher-forced entropy gate recovers this loss
from the inside out: a loose threshold restores the later positions and
half of the second to fourth while leaving the line start untouched, and
only a threshold that makes the decode sequential reaches the line start.
A decoding-time rule does better there: letting a line start only once
the line above is a few cells ahead of the waterfall recovers most of the
line-start loss at two cells per round, and at the same total loss uses
fewer rounds than the gate alone. Entropy is one policy the lattice can
run, and on free text not the best one.

\subsection{Free text generation}
The generation study uses 48 held-out prompts, a 12-token seed, a
$6\times64$ grid and top-$p=0.95$ sampling, scored by an earlier
autoregressive checkpoint as a likelihood judge. Sampled width-six
continuations raise the judge's loss by about 6\% relative to sequential
lattice samples. Greedy width-six generation scores better on the judge
and much worse on distinct trigrams: a favourable likelihood can
accompany repetitive output, so the greedy throughput figure is not
quality-preserving acceleration. The \Defer{} row uses its own threshold,
distinct from the latency benchmark and the reasoning calibrations.

\section{Wall-Clock Protocols and Query-State Reuse}
\label{app:reuse}

\paragraph{Single-request text generation.}
The matched 482M models use BF16 weights, static caches and CUDA graphs
on an H100 NVL. We use the first 12 tokenizer tokens of ``The city
council met on Tuesday to discuss the new budget.'' as the prompt and a
$6\times64$ lattice. Sampled rows use temperature 1, top-$p=0.95$ and
random seed 0; \Defer{} uses an entropy threshold of 2.5 nats, committing
up to six lowest-entropy candidates if none passes the threshold.
After a setup call allocates buffers and captures the graph, we take
the median of three complete generation calls, synchronizing CUDA
before and after each timed call. Timing includes per-request prompt
seeding, scheduler logic and sampling, but excludes setup. Table~\ref{tab:speed}
divides this measured duration by the number of generated cells,
including row endings and excluding the prompt. Output lengths differ:
AR and the fixed schedules end all six rows; \Defer{} ends five, with
the remaining row reaching the grid boundary. These timings do not measure
LCLM or share the \Defer{} threshold of the 48-prompt quality study;
the task decoder also uses a different progress fallback.

\begin{table}[ht]
\centering
\caption{\textbf{Single-request wall-clock latency.} H100 NVL, 482M, BF16,
$6\times64$ grid, 12-token prompt, medians of three calls.
Speedup is the ratio of AR to LatLM ms/cell; outputs differ across policies.}
\label{tab:speed}
\small
\setlength{\tabcolsep}{4pt}
\begin{tabular}{lrrrrr}
\toprule
Decoder & Rounds & Cells & ms/round & ms/cell & Speedup \\
\midrule
AR, static cache + graph & 191 & 191 & 3.59 & 3.59 & 1.00$\times$ \\
LatLM, \Raster{} greedy & 66 & 66 & 5.32 & 5.32 & 0.67$\times$ \\
LatLM, \LCLM{} width 6, greedy & 20 & 84 & 8.62 & 2.05 & 1.75$\times$ \\
LatLM, \LCLM{} width 6, sampled & 59 & 181 & 5.56 & 1.81 & 1.98$\times$ \\
LatLM, \Defer{} width 6, sampled & 102 & 198 & 4.94 & 2.54 & 1.41$\times$ \\
\bottomrule
\end{tabular}
\end{table}

\paragraph{Batched kernel timing.}
We time the same model pair with a fixed 200-position visible prefix,
six query and six commit slots per LatLM round, and batches from 1 to
1024 (Table~\ref{tab:batched-wallclock}). Both models use BF16,
FlashAttention over the contiguous visible prefix, and CUDA graphs.
Following three warm-up calls and graph capture, we report the median
of 20 graph replays, with CUDA synchronized before and after each
wall-clock measurement. Inputs and caches are synthetic; the prefix
mask does not exercise arbitrary row-window masks. These measurements
exclude prompt processing, sampling and dynamic batch scheduling.

\begin{table}[ht]
\centering
\caption{\textbf{Batched wall-clock kernel costs.} Milliseconds per AR step
or LatLM round on H100 NVL. Reuse speedup compares the fused LatLM
kernel with query-state reuse, not LatLM generation with AR generation.
Break-even is reuse ms/round divided by AR ms/step: the required mean
commits per round per sequence to match AR's kernel throughput.}
\label{tab:batched-wallclock}
\small
\setlength{\tabcolsep}{6pt}
\begin{tabular}{rrrrrr}
\toprule
Batch & AR & LatLM fused & LatLM reuse & Reuse speedup & Break-even \\
\midrule
1 & 3.08 & 3.57 & 3.80 & 0.94$\times$ & 1.23 \\
64 & 4.48 & 6.56 & 5.77 & 1.14$\times$ & 1.29 \\
256 & 8.34 & 20.11 & 13.19 & 1.52$\times$ & 1.58 \\
512 & 13.58 & 38.94 & 24.19 & 1.61$\times$ & 1.78 \\
1024 & 24.27 & 80.09 & 48.55 & 1.65$\times$ & 2.00 \\
\bottomrule
\end{tabular}
\end{table}

\paragraph{Query-state reuse.}
State reuse retains the normalized input to each attention layer for
frontier queries; after selection, the relevant states are combined with
the committed token embedding and projected into keys and values, so a
committed cell's attention and feed-forward layers are not recomputed.
Deferred candidates use their latest scoring states. Reuse adds overhead
at batch one but reduces round cost at larger batches. At batch 1024,
the kernel-throughput ratio to AR would be
$c\,t_{\mathrm{AR}}/t_{\mathrm{reuse}}\simeq c/2$ for a mean of $c$
committed cells per round; $c$ is not measured by this synthetic benchmark.
On shared trajectories reuse changes no argmax in FP32 and three of
515 in BF16, and its held-out loss shift is within the bootstrap interval
of zero.

\end{document}